\newtheorem{definition}{Definition}
\newtheorem{proposition}{Proposition}
\begin{document}

\title{Explainable Ordinal Factorization Model: Deciphering the Effects of Attributes by Piece-wise Linear Approximation}

\author{Mengzhuo Guo\institute{Xi'an Jiaotong University, City University of Hong Kong, email: guomengzhuo@stu.xjtu.edu.cn} \and Zhongzhi Xu\institute{City University of Hong Kong, email: zhongzhxu2-c@my.cityu.edu.hk} \and Qingpeng Zhang\institute{Corresponding Author. City University of Hong Kong, email: qingpeng.zhang@cityu.edu.hk} \and Xiuwu Liao\institute{Xi'an Jiaotong University,  email: liaoxiuwu@mail.xjtu.edu.cn} \and Jiapeng Liu\institute{Xi'an Jiaotong University email: jiapengliu@mail.xjtu.edu.cn} }

\maketitle
\bibliographystyle{ecai}

\begin{abstract}
  Ordinal regression predicts the objects' labels that exhibit a natural ordering, which is important to many managerial problems such as credit scoring and clinical diagnosis. In these problems, the ability to explain how the attributes affect the prediction is critical to users. However, most, if not all, existing ordinal regression models simplify such explanation in the form of constant coefficients for the main and interaction effects of individual attributes. Such explanation cannot characterize the contributions of attributes at different value scales. To address this challenge, we propose a new explainable ordinal regression model, namely, the Explainable Ordinal Factorization Model (XOFM). XOFM uses the piece-wise linear functions to approximate the actual contributions of individual attributes and their interactions. Moreover, XOFM introduces a novel ordinal transformation process to assign each object the probabilities of belonging to multiple relevant classes, instead of fixing boundaries to differentiate classes. XOFM is based on the Factorization Machines to handle the potential sparsity problem as a result of discretizing the attribute scales. Comprehensive experiments with benchmark datasets and baseline models demonstrate that the proposed XOFM exhibits superior explainability and leads to state-of-the-art prediction accuracy.
\end{abstract}

\section{Introduction}
\label{sec-intro}
%
Ordinal regression (or ordinal classification) aims to learn a pattern for predicting the objects (e.g.: actions, items, products) labels that exhibit a natural ordering \cite{cheng2008neural}. Such problems are different from the nominal classification problems because the ordering specifies the user preferences to each object \cite{rennie2005loss}. For example, we can use an ordinal scale $\{poor, average, good, very\: good\}$ to estimate the condition of vehicles. The misclassification cost for assigning a ${good}$ vehicle to the ${poor}$ class is greater than assigning it to the ${average}$ class. Taking this ordinal information into consideration leads to more accurate models. However, if the standard nominal classification models are used without considering such information, we could obtain non-optimal solutions \cite{gutierrez2015ordinal}.

State-of-the-art methods for ordinal regression problems either transform the original problems to several binary ones or rely on the threshold-based models, which approximate a preference value for each object, and then use the trained thresholds to differentiate different classes. The ordinal regression problems play an important role in many managerial decision making problems such as clinical diagnoses \cite{bender1997ordinal}, consumer preference analysis \cite{GUO2019}, nano-particles synthesis assessment \cite{kadzinski2018co}, age estimation \cite{niu2016ordinal}, and credit scoring \cite{kim2012corporate}. In these contexts, the ability to capture the detailed relationships between the predictions and different attribute value scales in the model is as important as accuracy, because it helps the users understand and utilize the underlying model.

The existing ordinal regression methods explain the results either by providing constant coefficients measuring the relative importance of the main and interaction effects of the attributes, for instance the logistic-based models \cite{mccullagh1980regression,rennie2005loss}, or by presenting the estimated thresholds and the proportions of each possible classification, for instance the thresholds-based models \cite{sun2009kernel,gutierrez2015ordinal} and ordinal binary decomposition approaches \cite{cheng2008neural,deng2010ordinal}. Unfortunately, these methods cannot characterize the contributions of attributes at different value scales, which is critical to explaining how the model works. In addition, the boundary obtained by threshold-based methods often oversimplifies the condition, and thus may not work well for the dataset where many objects are close to the boundary.

To fill the gaps mentioned above, we propose the Explainable Ordinal Factorization Model (XOFM) for the ordinal regression problems. XOFM adopts the common assumption that there is a `preference' value for each object \cite{rennie2005loss}. XOFM uses piece-wise linear functions to approximate the actual contributions of different attribute value scales to the prediction using the training data. The trained XOFM can then estimate preference values for each object in the testing set, and calculate the probabilities of multiple relevant classes through comparing with the training samples. In this way, XOFM generalizes the thresholds from fixed values to intervals. Eventually, XOFM assigns each object the label with the highest probability. Because XOFM discretizes an attribute's value into multiple scales in the form of an attribute vector, it may lead to the sparsity problem. Thus, XOFM adopts the Factorization Machines (FMs)-based scheme to handle the sparsity problem \cite{rendle2010factorization}. The contributions are as follows:
\begin{itemize}
    \item The proposed XOFM introduces a novel ordinal regression transformation process that generalizes threshold-based ordinal regression procedures. It determines an interval for the thresholds based on the preference relationships among the objects. As a result, there is no need to initialize or estimate the single values of thresholds. 
    \item In addition to state-of-the-art prediction performance, XOFM model is able to explain the actual contributions of the main effects and interaction effects at different attribute value scales through determining the shapes of some piece-wise linear functions. Such explainability can provide detailed information to users to decipher the relationships between attributes and predictions. 
    \item We formulate the XOFM into a FMs-based scheme to handle the data sparsity problem, and extend the FMs to handle ordinal regression problems. To our best of knowledge, this is the first study that enhances the explainability of FMs in performing ordinal regression tasks.  
\end{itemize}

\section{Related Work}
\label{sec-relatedwork}
\subsection{Ordinal regression}
\label{subsec-ordre}

The ordinal regression approaches are commonly classified into three groups: \textit{na\"ive approaches}, \textit{ordinal binary decomposition approaches} and \textit{threshold-based models} \cite{gutierrez2015ordinal}. The na\"ive approaches either do not consider the preference levels of classes (such as using the standard nominal classifiers), or map the class labels into real values (such as the support vector regression, SVR model) \cite{smola2004tutorial}. However, the mapping of the class labels may hinder the performances because the metric distances between ordinal scales are usually unknown \cite{sun2009kernel}. 

Ordinal binary decomposition approaches solve the problem by decomposing an original ordinal regression problem into several binary classification problems. \cite{cheng2008neural} proposed a neural network-based method for ordinal regression (NNOR). The method decomposes and encodes the class labels, then trains a single neural network model to classify the objects. Similarly, extreme learning machine (ELMOR), a single-layer feed-forward neural network-based model, has also been adapted to ordinal regression problems \cite{deng2010ordinal}. More recently, CNNOR, a convolutional neural network (CNN)-based model was proposed to handle the ordinal regression problems. CNNOR is unique in being able to handle small datasets given the CNN structure \cite{liu2017deep}. These methods achieve good performance, but are limited in model explainability given their neural network scheme.

The threshold-based models are the most popular approaches for ordinal regression problems. They assume that there is a function measuring the `preference' values of the objects and compare the preference values to a set of thresholds, which are either predefined or estimated using data. Following this common assumption, the proportional odds model (POM) uses a standard logistic function to predict the probability of an object being classified to a class \cite{mccullagh1980regression}. POM has become the standard ordinal regression method, and the basis for most followup threshold-based models \cite{gutierrez2015ordinal}. Typical examples include the ordinal logistic model with immediate-threshold variant (LIT) and all-threshold variant (LAT) \cite{rennie2005loss}, and kernel discriminant learning for ordinal regression (KDLOR) \cite{sun2009kernel}. Unfortunately, none of these models can decipher the contributions of the attributes at different specific value scales.

\subsection{Factorization Machines}
\label{subsec-FM}

Factorization machines (FMs) combine the advantages of support vector machines with factorization models. FMs factorize the interaction parameters instead of directly estimating them, thus they have the advantage in handling sparse data \cite{rendle2010factorization}. FMs have been widely applied to various machine learning problems including recommender systems \cite{rendle2011fast}, click predictions \cite{wang2017deep}, and image recognition \cite{li2017factorized}. FMs can be used for ranking problems given the aforementioned characteristics. However, few studies applied FMs to ordinal regression due to the difficulty in transforming the ordinal regression problems into a ranking form. Ordinal factorization machine and hierarchical sparsity (OFMHS) utilizes FMs for ordinal regression through formulating it as a convex optimization problem. In particular, OFMHS can model the hierarchical structure behind the input variables \cite{guoordinal}. Although the method achieves state-of-the-art performance, it does not explore the detailed relationship between the attributes and predictions.

\subsection{Explainable Models}
\label{subsec-expmodel}

Apparently there are various definitions of the model explainability. Here we focus on enhancing the \textit{model induction} as defined in \cite{gunning2017explainable}. Specifically, an explainable model should be able to characterize the contributions of the individual attributes and reveal the interaction effects of the attributes \cite{lou2012intelligible}. They used \textit{score/shape functions} to measure the main effects and mapped the interaction effects to real values \cite{lou2013accurate}. At last, different \textit{link functions} are used to link these score functions with various machine learning tasks \cite{vaughan2018explainable,tsang2018neural}. Unfortunately, these powerful explainable models cannot be directly adopted for ordinal regression problems. In this study, we use the piece-wise linear score functions to characterize the contributions of individual attributes. The mapping functions explore the interaction effects of the attributes, and the link function measures the `preference' values of the objects.

\section{Explainable Ordinal Factorization Model}
\label{sec-FMmodel}
\subsection{Preliminaries}
\label{subsec-pre}

Consider an ordinal regression problem that concerns a set of $N$ objects $\mathcal{X} = \{\mathbf{x}_1,\dots,\mathbf{x}_i,\dots,\mathbf{x}_N \}$, where $\mathbf{x}_i = (x_{i,1},\dots,x_{i,j},\dots,x_{i,m}) \in \mathbb{R}^m$, and the class label $y_i\in \mathcal{Y} = \{1,\dots,h,\dots,H \}$. The classes are in a natural ordering $C_H \succ C_{H-1} \succ \cdots \succ C_1$, where $C_h \succ C_{h-1}, h = 2,\dots,H$ indicates that the objects in the class $C_h$ are preferred to those in $C_{h-1}$. An attribute interaction, $\mathcal{I}$, is a subset of all attributes: $\mathcal{I}\subseteq \{1,2,\dots,m \}$. We denote $\mathcal{I}_d \subseteq \mathcal{I}, d\in D=\{2,3,\dots,m \}$ be the set of all $d$-order interactions. More specifically, if $d=2$, $\mathcal{I}_2=\{ \{i,j\} | i,j\in D, i\neq j \}$ is the set of all pairwise interactions. 

We assume the `preference' value of each object is determined by a link function:
\begin{align}
    U(\mathbf{x}_i) &= \sum\limits_{j = 1}^m {{u_j}\left( { x_{i,j} } \right)} + \sum\limits_{\{ {j_1},{j_2}\}  \subseteq {\mathcal{I}_2}} {{u_{{j_1},{j_2}}}\left( {{x_{i,j_1}, x_{i,j_2}}} \right)}\nonumber  \\ &+  \cdots + \sum\limits_{\{ {j_1}, \ldots {j_d}\}  \subseteq {\mathcal{I}_d}} {{u_{{j_1}, \ldots {j_d}}}\left( {x_{i,j_1}, \ldots ,{x_{i,j_d}}} \right)}\label{eq-globalvalue}
\end{align}

where $d \in D$ is the predefined highest order of interactions. $u_j(\cdot), j=1,\dots,m$ denotes score function of $j$-th attribute and $u_{j_1,\cdots,j_d}(\cdot), d\in D$ denotes a mapping function of the interacting attributes. We use a piece-wise linear function to estimate each score function because any nonlinear functions can be approximated by sampling the curves and interpolating linearly between the points \cite{hamann1994data}. Let $\mathbf{X}_j = \left[\alpha_j, \beta_j \right]$, where $\alpha_j = min\{x_{i,j}|\mathbf{x}_i \in \mathcal{X} \}$ and $\beta_j = max\{x_{i,j}|\mathbf{x}_i \in \mathcal{X} \}$, be the whole value scale of the $j$-th attribute. For each attribute, we partition the scale into $\gamma_j$ equal sub-intervals $[\varphi^0_j, \varphi^1_j], [\varphi_j^1, \varphi_j^2 ],\dots,[\varphi_j^{\gamma_j-1}, \varphi_j^{\gamma_j} ]$, where $\varphi_j^k = {\alpha _j} + \frac{k}{{{\gamma _j}}}\left( {{\beta _j} - {\alpha _j}} \right),k = 0, 1, \ldots {\gamma _j}$ are called characteristic points.

\begin{definition}
The attribute vector $\boldsymbol{\Phi}_i\in \mathbb{R}^\gamma,\gamma={\sum\nolimits_{j = 1}^m {{\gamma _j}} }$ of object $\mathbf{x}_i \in \mathcal{X}$ is defined as follows:
\begin{equation}
    \boldsymbol{\Phi}_i = {\left( {\underbrace {\phi_{i,1}^1, \ldots \phi_{i,1}^{{\gamma _1}},}_{\text{$1$-st \: Attribute}} \ldots, \underbrace {\ldots, \phi_{i,j}^{k_j},  \ldots,}_{\text{$j$-th \: Attribute}} \ldots \underbrace {,\ldots \phi_{i,m}^{{\gamma _m} }}_{\text{$m$-th \: Attribute}}} \right)^T}
\end{equation}
where $\phi_{i,j}^{k_j} = \left\{ \begin{array}{l}
1, \quad x_{i,j} > \varphi_j^{{k_j}},\\
\frac{{x_{i,j} - \varphi_j^{k_j-1}}}{{\varphi_j^{{k_j} } - \varphi_j^{k_j-1}}}, \quad \varphi_j^{k_j-1} \le x_{i,j} \le \varphi_j^{{k_j}},\\
0,\quad \quad \text{otherwise}.
\end{array} \right. j=1,\dots,m,$ and ${k_j}=1,\dots,\gamma_j.$ \label{def-1}
\end{definition}

\begin{definition}
The marginal score vector $\mathbf{u}$ is defined as:
\begin{equation}
    \mathbf{u}={\left( {\Delta _1^1, \ldots ,\Delta _1^{{\gamma _1}}, \ldots ,\Delta _j^{{k_j}}, \ldots ,\Delta _m^1, \ldots ,\Delta _m^{{\gamma _m}}} \right)^T}
\end{equation}
where $\Delta_j^{k_j} = u_j(\varphi_j^{k_j})-u_j(\varphi_j^{k_j-1}),j=1,\dots,m,$ and $k_j = 1,\dots,\gamma_j$, is the difference between two consecutive characteristic points. \label{def-2}
\end{definition}

Given Definitions \ref{def-1} and \ref{def-2}, the first term in Eq.(\ref{eq-globalvalue}), namely the main effects of the attributes, can be reformulated. As for the interaction effects part, we consider the pairwise interactions ($d=2$) because this is the most common situation. Since directly estimating the individual mappings of $u_{j_1,j_2}(\cdot) \in \mathbb{R}, \{j_1,j_2\} \subseteq \mathcal{I}_2$ may cause data sparsity problem \cite{rendle2010factorization}, we model these interactions by factorizing them. The new link function is as follows:
\begin{equation}
    U(\mathbf{x}_i) = \mathbf{u}^T\boldsymbol{\Phi}_i + \sum\limits_{j_1 = 1}^\gamma  {\sum\limits_{j_2 = j_1 + 1}^\gamma  {\left\langle {{\mathbf{v}_{j_1}},{\mathbf{v}_{j_2}}} \right\rangle {\Phi_{i,{j_1}}}{\Phi_{i,j_2}}} } 
    \label{eq-FMglobal}
\end{equation}
where $\left\langle {{\mathbf{v}_{{j_1}}},{\mathbf{v}_{{j_2}}}} \right\rangle = \sum\limits_{f = 1}^k {{v_{{j_1},f}}{v_{{j_2},f}}} $ is a dot product of size $k$, and $\Phi_{i,j}$ is the $j$-th element in vector $\boldsymbol{\Phi}_i$. The Eq.(\ref{eq-FMglobal}) is in the form of FMs. The only difference is that the Eq.(\ref{eq-FMglobal}) does not have a global bias term, which does not affect the predictions (please refer to the following section).

\subsection{Transform Ordinal Regression Problems}
\label{subsec-trans}
We introduce a new process for transforming ordinal regression problems. The process determines the labels of the objects in the testing set by comparing their `preference' values to the objects in the training set $\mathcal{X}_{train} \subseteq \mathcal{X}$ \cite{greco2010multiple}. The boundaries of each class are learned from these comparisons and are defined by intervals instead of some fixed real values, which generalizes the threshold-based ordinal regression procedures.

\begin{definition}
Given a link function $U(\cdot)$, the `preference' relationship between two objects is concordance with $U$ if and only if:
\begin{equation}
    \forall \mathbf{x}_i,\mathbf{x}_j \in \mathcal{X}_{train}, U(\mathbf{x}_i)\ge U(\mathbf{x}_j) \Longrightarrow y_i \ge y_j
\end{equation}
and equivalently, 
\begin{equation}
    \forall \mathbf{x}_i,\mathbf{x}_j \in \mathcal{X}_{train}, y_i > y_j \Longrightarrow U(\mathbf{x}_i) \ge U(\mathbf{x}_j) + \tau \label{eq-constraint}
\end{equation} where $\tau$ is a predefined positive margin. \label{def-3}
\end{definition}

Definition \ref{def-3} assumes the labels of objects are concordant to their `preference' values (scores), i.e., the greater the value of $U(\mathbf{x}_i)$, the more likely that the object $\mathbf{x}_i$ being assigned to better classes. Such information is helpful for constructing loss functions for ordinal regression problems. Given the following definition, we determine the multiple relevant classes for objects $\mathbf{x}_i \in \mathcal{X}$:
\begin{definition}
Given the link function $U(\cdot)$, the class interval of an object $\mathbf{x}_i \in \mathcal{X}$ is $[C_{L_i}, C_{R_i}]$, where:
\begin{equation}
    L_i = Max( \{ 1 \} \cup \{y_j: U(\mathbf{x}_j) \le U(\mathbf{x}_i), \mathbf{x}_j \in \forall \mathcal{X}_{train}  \} )
    \label{eq-L}
\end{equation}
and 
\begin{equation}
    R_i = Min( \{ H \} \cup \{y_j: U(\mathbf{x}_j) \ge U(\mathbf{x}_i), \mathbf{x}_j \in \forall \mathcal{X}_{train}  \} )
    \label{eq-R}
\end{equation}
\label{def-4}
\end{definition}

\begin{proposition}
Given the link function $U(\cdot)$, the interval $[C_{L_i}, C_{R_i}]$ of object $\mathbf{x}_i \in \mathcal{X}$ is not empty. \label{prop-1}
\end{proposition}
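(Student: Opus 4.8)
The plan is to observe that the interval $[C_{L_i}, C_{R_i}]$ is non-empty exactly when $L_i \le R_i$, and then to derive this inequality by a short case analysis that ultimately rests on the concordance property of the link function $U$ stated in Definition~\ref{def-3}. First I would record two bounds that are immediate from Eqs.~(\ref{eq-L}) and (\ref{eq-R}): since every class label lies in $\{1,\dots,H\}$ and the singletons $\{1\}$ and $\{H\}$ are always included in the respective unions, we always have $1 \le L_i \le H$ and $1 \le R_i \le H$. In particular, if $L_i = 1$ then trivially $L_i \le R_i$, and if $R_i = H$ then trivially $L_i \le R_i$; so both of these boundary situations are settled at once, and they also cover the degenerate case $\mathcal{X}_{train} = \emptyset$.

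It then remains to treat the case $L_i > 1$ and $R_i < H$. The key step here is to notice that, because $1$ belongs to the set whose maximum is $L_i$, having $L_i > 1$ forces the maximum to be attained by a genuine training object: there exists $\mathbf{x}_a \in \mathcal{X}_{train}$ with $U(\mathbf{x}_a) \le U(\mathbf{x}_i)$ and $y_a = L_i$. Symmetrically, $R_i < H$ forces the minimum defining $R_i$ to be realized by some $\mathbf{x}_b \in \mathcal{X}_{train}$ with $U(\mathbf{x}_b) \ge U(\mathbf{x}_i)$ and $y_b = R_i$. Chaining the two inequalities on preference values yields $U(\mathbf{x}_a) \le U(\mathbf{x}_i) \le U(\mathbf{x}_b)$, hence $U(\mathbf{x}_b) \ge U(\mathbf{x}_a)$.

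At this point I would invoke the concordance of $U$ from Definition~\ref{def-3}: from $U(\mathbf{x}_b) \ge U(\mathbf{x}_a)$ we conclude $y_b \ge y_a$, i.e.\ $R_i \ge L_i$. Together with the two boundary cases this gives $L_i \le R_i$ in all situations, so $[C_{L_i}, C_{R_i}]$ contains at least $C_{L_i}$ and is non-empty, which is the assertion of Proposition~\ref{prop-1}.

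I expect the only genuine subtlety — and therefore the step to spell out carefully — to be the reduction to the non-trivial case: one must argue that when $L_i > 1$ (respectively $R_i < H$) the extremum is actually achieved by a training sample rather than by the default endpoint $1$ (respectively $H$), since it is precisely this fact that makes the concordance hypothesis applicable to compare the two labels; the remainder is a plain transitivity argument. It is also worth stating explicitly that the proof uses the standing assumption that $U$ is concordant with the observed labels on $\mathcal{X}_{train}$ in the sense of Definition~\ref{def-3} — without it the two realizing objects could carry ``crossed'' labels and the interval could indeed be empty.
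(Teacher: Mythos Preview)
Your argument is correct and follows essentially the same route as the paper: both proofs dispose of the trivial boundary cases and then, in the main case, pick training witnesses $\mathbf{x}_a,\mathbf{x}_b$ realizing $L_i$ and $R_i$, chain $U(\mathbf{x}_a)\le U(\mathbf{x}_i)\le U(\mathbf{x}_b)$, and invoke the concordance of Definition~\ref{def-3} to conclude $L_i=y_a\le y_b=R_i$. The only cosmetic differences are that the paper splits cases on emptiness of the sets $\{\mathbf{x}_j:U(\mathbf{x}_j)\le U(\mathbf{x}_i)\}$ and $\{\mathbf{x}_j:U(\mathbf{x}_j)\ge U(\mathbf{x}_i)\}$ rather than on $L_i=1$ or $R_i=H$, and it phrases the key step as a contradiction using the margin form $y_j>y_k\Rightarrow U(\mathbf{x}_j)\ge U(\mathbf{x}_k)+\tau$ instead of the direct implication you use.
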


\proof
If $set_1 = \{\mathbf{x}_j \in \forall \mathcal{X}_{train}: U(\mathbf{x}_j) \le U(\mathbf{x}_i)\} = \emptyset$, then $L_i = 1$, thus $L_i \le R_i$ and the interval is not empty. Analogously, if $set_2 = \{\mathbf{x}_j \in \forall \mathcal{X}_{train}: U(\mathbf{x}_j) \ge U(\mathbf{x}_i)\} = \emptyset$, then $R_i = H$, thus $L_i \le R_i$ and the interval is not empty.

We prove it when $set_1 \ne \emptyset$ and $set_2 \ne \emptyset$ by contradiction. Assume $L_i > R_i$, we have $\mathbf{x}_j \in set_1$ and $\mathbf{x}_k \in set_2$ such that $y_j > y_k$. As stated in Definition \ref{def-3}, $y_j > y_k$ indicates that $U(\mathbf{x}_j)\ge U(\mathbf{x}_k) + \tau$. Since $\tau > 0$, thus $U(\mathbf{x}_j) > U(\mathbf{x}_k)$. Note that $\mathbf{x}_j \in set_1$ and $\mathbf{x}_k \in set_2$, thus $U(\mathbf{x}_j) \le U(\mathbf{x}_k)$, which contradicts the assumption and concludes the proof.
\endproof

Given Definition \ref{def-3} and Proposition \ref{prop-1}, XOFM always provides an interval for each object. The interval contains either a single class or multiple relevant classes. The users can determine the final class based on either their domain knowledge or the following indicator. We define an indicator $\kappa(\mathbf{x}_i \to C_h)$ that favors the classification $\mathbf{x}_i \to C_h, h=1,\dots,H$:
\begin{definition}
Given an class interval $[C_{L_i}, C_{R_i}]$, if ${L_i} = {R_i}$, then $\mathbf{x}_i \to C_{L_i}$. If not, the indicator $\kappa(\mathbf{x}_i \to C_h) = \frac{ card(\hat{y}_i^h)}{{\sum\nolimits_{s = 1,...,h - 1,h + 1,...,H} {\left| {{\mathcal{X}_s}} \right|} }}, h = L_i,L_i+1,\dots,R_i$, where ${\mathcal{X}_s} = \{\mathbf{x}_j\in \mathcal{X}_{train}|y_j=s \}, s = 1,\dots,h-1,h+1,\dots,H$ and
\begin{align}
    card(\hat{y}_i^h) &= {\sum\limits_{s = 1,...,h - 1} {\left| {\left\{ {{\mathbf{x}} \in {\mathcal{X}_s}|U({\mathbf{x}_i}) - U({\mathbf{x}}) > \tau } \right\}} \right|} } \nonumber \\
    &{ + \sum\limits_{s = h + 1,...,H} {\left| {\left\{ {{\mathbf{x}} \in {\mathcal{X}_s}|U({\mathbf{x}}) - U({\mathbf{x}_i}) > \tau } \right\}} \right|} } \nonumber
\end{align}
\label{def-5}
\end{definition}

Definition \ref{def-5} provides a proportion of objects that are classified to a class either worse or better than $C_h$. Obviously, the greater $\kappa(\mathbf{x}_i \to C_h)$ is, the more likely that $\hat{y}_i = h$. Hence, we classify $\mathbf{x}_i$ to the class with the maximal $\kappa(\mathbf{x}_i \to C_h)$.
\begin{proposition}
The proposed transformation procedure for ordinal regression generalizes the threshold-based procedure. The classifications determined by the proposed procedure can be obtained by fixing the thresholds within some intervals instead of single values. 
\end{proposition}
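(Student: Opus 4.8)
The plan is to make precise the correspondence between the interval $[C_{L_i},C_{R_i}]$ of Definition~\ref{def-4} and the family of classifications produced by a classical threshold rule whose cut-points are allowed to range over suitable intervals rather than being pinned to single values. First I would fix the trained link function $U(\cdot)$ and, for each $h=1,\dots,H-1$, define the \emph{admissible threshold interval} $\Theta_h=[a_h,b_h)$, with $a_h=\max(\{-\infty\}\cup\{U(\mathbf{x}_j):\mathbf{x}_j\in\mathcal{X}_{train},\,y_j\le h\})$ and $b_h=\min(\{+\infty\}\cup\{U(\mathbf{x}_j):\mathbf{x}_j\in\mathcal{X}_{train},\,y_j\ge h+1\})$. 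Reusing the contradiction argument of Proposition~\ref{prop-1} together with the concordance constraint \eqref{eq-constraint}, one shows $a_h\le b_h-\tau<b_h$, so each $\Theta_h$ is non-empty, that the choice $\theta_h=a_h$ is non-decreasing in $h$, and that \emph{every} non-decreasing selection $\theta_h\in\Theta_h$ reproduces all training labels under the rule ``$\mathbf{x}\mapsto C_h$ iff $\theta_{h-1}<U(\mathbf{x})\le\theta_h$'' (with $\theta_0=-\infty$, $\theta_H=+\infty$). This is precisely the threshold-based procedure; taking each $\Theta_h$ to be a singleton recovers the usual fixed-threshold method, which gives the ``generalizes'' half of the claim.

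The core of the argument is to show that, for a fixed object $\mathbf{x}_i$, the set of labels reachable by such threshold selections is exactly $\{L_i,L_i+1,\dots,R_i\}$. I would split this into a necessity part and a sufficiency part. For necessity, any non-decreasing $\theta\in\prod_h\Theta_h$ assigns $\mathbf{x}_i$ to a class in $[C_{L_i},C_{R_i}]$: since $L_i$ is witnessed by a training object of class $L_i$ with $U$-value $\le U(\mathbf{x}_i)$ (or $L_i=1$ by the default, handled by $\theta_0=-\infty$), we get $b_{L_i-1}\le U(\mathbf{x}_i)$ and hence $\theta_{L_i-1}<U(\mathbf{x}_i)$, forcing the assigned label $\ge L_i$; symmetrically $a_{R_i}\ge U(\mathbf{x}_i)$ forces it $\le R_i$. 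For sufficiency, I would first establish the key structural fact: if $L_i<R_i$ then Proposition~\ref{prop-1} and Definition~\ref{def-4} force (i) no training object carries a label strictly between $L_i$ and $R_i$, and (ii) every training label $\le L_i$ sits strictly below $U(\mathbf{x}_i)$ while every training label $\ge R_i$ sits strictly above it. Consequently $\Theta_{L_i}=\Theta_{L_i+1}=\dots=\Theta_{R_i-1}=[A,B)$ with $A=a_{L_i}<U(\mathbf{x}_i)<B=b_{R_i-1}$, so for any target $h\in\{L_i,\dots,R_i\}$ one can set $\theta_k<U(\mathbf{x}_i)$ for $k<h$ and $\theta_k\ge U(\mathbf{x}_i)$ for $k\ge h$ while keeping every $\theta_k$ inside its (now coinciding) admissible interval and non-decreasing; this selection assigns $\mathbf{x}_i$ to $C_h$.

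Finally I would close the proof by invoking Definition~\ref{def-5}: the proposed procedure returns $C_{L_i}$ when $L_i=R_i$, and otherwise the $C_h$ maximizing $\kappa(\mathbf{x}_i\to C_h)$ over $h=L_i,\dots,R_i$; in both cases the returned label lies in $\{L_i,\dots,R_i\}$, so by the sufficiency part it coincides with the output of the threshold rule for an explicit choice of cut-points $\theta_h\in\Theta_h$. Hence every classification the new procedure can produce is attainable by ``fixing the thresholds within the intervals $\Theta_h$'', and the classical single-value thresholds are the degenerate special case where each $\Theta_h$ collapses to a point, which is exactly the statement.

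\textbf{Main obstacle.} The delicate point is the structural fact (i)--(ii) underlying sufficiency, i.e.\ showing that a strict inequality $L_i<R_i$ really does force a label-free gap of the training set straddling $U(\mathbf{x}_i)$, so the intermediate admissible intervals collapse to a common interval containing $U(\mathbf{x}_i)$. It also requires a careful, case-by-case treatment of the boundary conventions $\theta_0=-\infty$, $\theta_H=+\infty$ and of degenerate situations (empty classes, $L_i=1$ or $R_i=H$ by default, and $\mathbf{x}_i\in\mathcal{X}_{train}$ which forces $L_i=R_i=y_i$); this bookkeeping is routine but must be done to make the interval argument airtight.
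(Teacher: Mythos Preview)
Your proposal is correct and rests on the same underlying idea as the paper---namely, that under the concordance assumption of Definition~\ref{def-3} the training $U$-values of different classes are separated, so any cut-point placed in the resulting gap reproduces the training labels and, depending on where it falls relative to $U(\mathbf{x}_i)$, realizes one of the labels in $[C_{L_i},C_{R_i}]$. The difference is in scope and rigor. The paper argues by two exhaustive cases on the position of $U(\mathbf{x}_i)$: either it lies in the range $[\min_{\mathbf{x}^*\in\mathcal{X}_h}U(\mathbf{x}^*),\max_{\mathbf{x}^*\in\mathcal{X}_h}U(\mathbf{x}^*)]$ of some class $h$ (giving $L_i=R_i=h$), or it lies strictly between the top of $\mathcal{X}_h$ and the bottom of $\mathcal{X}_{h+1}$ (giving $L_i=h$, $R_i=h+1$); in each case it exhibits the interval of admissible $b_h$ that yields the corresponding classification. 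Your argument instead defines the admissible threshold intervals $\Theta_h$ globally, proves they are non-empty via the $\tau$-margin, and then characterizes the set of threshold-reachable labels for $\mathbf{x}_i$ as exactly $\{L_i,\dots,R_i\}$ through a necessity/sufficiency split. This buys you two things the paper's proof glosses over: (a) you cover the situation $R_i-L_i>1$ arising from empty intermediate training classes, whereas the paper's case analysis tacitly assumes $R_i\le L_i+1$; and (b) you verify that a single, globally non-decreasing selection $\theta_1\le\dots\le\theta_{H-1}$ can be made, not merely that each $b_h$ individually admits an interval. The paper's version is shorter; yours is the one that actually closes all the bookkeeping you correctly flag as the main obstacle.
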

\begin{proof}
From Definition \ref{def-3}, the `preference' between two objects are concordance with the value of link function $U(\cdot)$, i.e., $\mathbf{x}_i^*,\mathbf{x}_j^*\in \mathcal{X}_{train}, U(\mathbf{x}_i^*) \geq U(\mathbf{x}_j^*) \Rightarrow y_i^* \geq y_j^*$. Assume the threshold $b_h$ and $b_{h-1}$ are the upper and lower bounds of $C_h,h\in \mathcal{Y}$, respectively. If $Max\{U(\mathbf{x}_i^*)|{\mathbf{x}_i^*\in \mathcal{X}_{h}}\} < b_h \leq Min\{U(\mathbf{x}_i^*)|{\mathbf{x}_i^*\in\mathcal{X}_{h+1}}\}$, then any object $\mathbf{x}_i \in \mathcal{X}$ such that $b_{h-1}\leq U(\mathbf{x}_i) < b_h$ will be classified to class $C_h$ by threshold-based procedure. Considering testing samples $\mathbf{x}_i\in \mathcal{X}/\mathcal{X}_{train}$, we have two cases:

(1) $\exists h \in \{1,\dots, H\}$ and $Min\{U(\mathbf{x}_i^*)|{\mathbf{x}_i^*\in \mathcal{X}_{h}}\}\leq U(\mathbf{x}_i) < Max\{U(\mathbf{x}_i^*)|{\mathbf{x}_i^*\in \mathcal{X}_{h}}\}$. Given Definition \ref{def-4}, $L_i = R_i = h$ and the proposed procedure will classify $\mathbf{x}_i$ to class $C_h$. Moreover, since $Min\{U(\mathbf{x}_i^*)|{\mathbf{x}_i^*\in\mathcal{X}_{h}}\}\leq U(\mathbf{x}_i) <Max\{U(\mathbf{x}_i^*)|{\mathbf{x}_i^*\in\mathcal{X}_{h}}\}$, the threshold-based procedure will always classifies $\mathbf{x}_i$ to $C_h$ as long as the thresholds $b_{h-1}$ and $b_h$ satisfying $Max\{U(\mathbf{x}_i^*)|{\mathbf{x}_i^*\in \mathcal{X}_{h-1}} \} <  b_{h-1} \leq Min\{U(\mathbf{x}_i^*)|{\mathbf{x}_i^*\in\mathcal{X}_{h}} \}$ and $Max\{U(\mathbf{x}_i^*)|{\mathbf{x}_i^*\in \mathcal{X}_{h}} \} < b_h$ $\leq Min\{U(\mathbf{x}_i^*)|\mathbf{x}_i^*\in\mathcal{X}_{h+1}\}$.
 
(2) $\exists h \in \{1,\dots, H-1\}$ and $Max\{U(\mathbf{x}_i^*)|{\mathbf{x}_i^*\in \mathcal{X}_{h}}\} < U(\mathbf{x}_i) < Min\{U(\mathbf{x}_i^*)|{\mathbf{x}_i^*\in \mathcal{X}_{h+1}}\}$. Given Definition \ref{def-4}, $L_i =h $ and $ R_i = h+1$, the proposed procedure will classify $\mathbf{x}_i$ to an interval $[C_h,C_{h+1}]$. Moreover, if $ Max\{ U(\mathbf{x}_i^*)|{\mathbf{x}_i^*\in \mathcal{X}_{h}} \} < b_h \leq U(\mathbf{x}_i)$, the threshold-based procedure will classify $\mathbf{x}_i$ to $C_{h+1}$, and if $U(\mathbf{x}_i)< b_h \leq Min\{U(\mathbf{x}_i^*)|{\mathbf{x}_i^*\in \mathcal{X}_{h+1}} \}$ the classification will be $C_h$. Given two cases, the threshold-based procedure provides single values of the thresholds, and classifies an object to a single class within an interval that can be stemmed from the proposed procedure. Therefore, the propose procedure is a general form of the threshold-based procedure. 
\end{proof}

\subsection{Learning XOFM}
\label{subsec-train}
The parameters in the proposed XOFM, i.e., $\mathbf{u}\in \mathbb{R}^\gamma, \mathbf{V} \in \mathbb{R}^{\gamma \times k}$ can be estimated under a standard FMs-based scheme with the following loss function:
\begin{equation}
    Loss = \frac{1}{2}\sum\limits_{{\mathbf{x}_i},{\mathbf{x}_j} \in \mathcal{X}_{train},{\mathbf{x}_i} \ne {\mathbf{x}_j}} {l\left( {\left( {{\mathbf{x}_i},{\mathbf{x}_j}} \right),{y_{i,j}}} \right)^2} 
\end{equation}
where $l\left( {\left( {{\mathbf{x}_i},{\mathbf{x}_j}} \right),{y_{i,j}}} \right) = \max \left\{ {0,U({\mathbf{x}_j}) - U({\mathbf{x}_i}) + \tau } \right\}$ if ${y_{i,j}} = {y_i} - {y_j} > 0$. All pairwise comparisons with $U({\mathbf{x}_i}) - U({\mathbf{x}_j})$ less than the predefined margin $\tau$ are penalized. Obviously, since we focus on the difference between two objects scores, the global bias term in traditional FMs can be discarded. The model parameters can be learned by gradient descent methods. The gradient of the parameters in XOFM is \cite{rendle2012factorization}:
\begin{equation}
    \frac{\partial U({\mathbf{x}_i})}{{\partial \theta }} = \left\{ \begin{array}{l}
{\Phi _{i,j}}, \quad \quad \quad \quad \quad \quad \quad \quad \quad \quad \quad \text{if}\: \theta\: \text{is}\:{\mathbf{u}_j}\\
{\Phi _{i,j}}\sum\nolimits_{k = 1}^m {{v_{k,f}}{\Phi _{k,j}}}  - {v_{j,f}}\Phi _{i,j}^2,\:\text{if}\: \theta \: \text{is}\:{v_{j,f}}
\end{array} \right.
\end{equation}

For direct optimization of the loss function, the derivatives are:
\begin{align}
    &\frac{\partial }{{\partial \theta }}l{\left( {\left( {{{\mathbf{x}}_i},{{\mathbf{x}}_j}} \right),{y_{i,j}}} \right)^2} = \nonumber \\ &\left\{ \begin{array}{l}
0,\quad \quad \quad \quad \quad \quad \quad \text{if} \quad U({\mathbf{x}_i}) - U({\mathbf{x}_j}) - \tau  \ge 0\\
2(U({\mathbf{x}_j}) - U({\mathbf{x}_i}) + \tau )(\frac{{\partial U({\mathbf{x}_j})}}{{\partial \theta }} - \frac{{\partial U({\mathbf{x}_i})}}{{\partial \theta }}) \quad \text{o.w.}
\end{array} \right.
\end{align}

The computational complexity for the training process is linear while the computational complexity for preprocessing the data is $\mathcal{O}(N^2)$. We can use the standard optimization algorithms that have been proposed for other machine learning models to estimate the parameters in XOFM. Stochastic gradient descent (SGD) is an iterative method for optimizing an objective function with smoothness properties \cite{robbins1951stochastic}. Since the loss function in XOFM is convex, it is suitable to use SGD algorithm to optimize the parameters. Nevertheless however, other complex optimization algorithms such as advanced stochastic approximation algorithms and Markov Chain Monte Carlo inference are also suitable for the proposed XOFM. Algorithm \ref{alg-sgd} shows how to apply SGD to optimize the XOFM.

\begin{algorithm}[h]
\caption{SGD for XOFM.}
\label{alg-sgd}
\begin{algorithmic}[1]
\REQUIRE Training data $\mathbf{x}_i,\mathbf{x}_i \in \mathcal{X}_{train}$, regularization terms $\lambda_1,\lambda_2$, predefined number of sub-intervals $\gamma_j,j=1,\dots,m$, dot product size $k$, learning rate $\eta$, number of iteration $iter$, and initialization $\sigma$. \\
\ENSURE Parameters $\mathbf{w}$ and $\mathbf{V}$ in XOFM. \\
\STATE Vectorize the attributes scales and determine the differences of the attributes vectors between pairwise alternatives.\\
\STATE Initialize $\mathbf{w}\leftarrow \left(0,\dots,0 \right),\mathbf{V}\sim \mathcal{N}\left(0,\sigma \right)$.\\
\WHILE{$i \leq iter$}
\FOR{$\left( \mathbf{x}_i,\mathbf{x}_j\right) \in \{\left( \mathbf{x}_i,\mathbf{x}_j\right)|\mathbf{x}_i,\mathbf{x}_j \in \mathcal{X}_{train}, y_i>y_j \}$}
\FOR{$n \in \{1,\dots,\gamma \}$}
\STATE $w_n \leftarrow w_n - \eta \left( \frac{\partial }{{\partial w_n }}l{\left( {\left( {{{\mathbf{x}}_i},{{\mathbf{x}}_j}} \right),{y_{i,j}}} \right)^2} + 2\lambda_1 w_n \right)$;
\FOR{$f \in \{1,\dots,k \}$}
\STATE $v_{n,f} \leftarrow v_{n,f} - \eta \left( \frac{\partial }{{\partial v_{n,f} }}l{\left( {\left( {{{\mathbf{x}}_i},{{\mathbf{x}}_j}} \right),{y_{i,j}}} \right)^2} + 2\lambda_2 v_{n,f} \right)$;
\ENDFOR
\ENDFOR
\ENDFOR
\STATE $i\leftarrow i+1$;
\ENDWHILE
\end{algorithmic}
\end{algorithm}

\subsection{Explainable Model and Regularization}
\label{subsec-expandreg}

XOFM uses piece-wise linear functions to approximate the actual contributions of the attributes at different value scales. More specifically, the vector $\mathbf{u}$ characterizes the main effects by presenting a score function versus individual attributes at different value scales. The parameters in matrix $\mathbf{V}$ decipher the interaction effects of discretized value scales between two attributes. These parameters can help us understand the pairwise interactions via a interaction matrix (visualized as a heat-map), in which the whole area is divided into small blocks and each block represents the interaction effects of the corresponding intervals of attribute scales. 

To avoid the over-fitting problem, we can modify the loss function of XOFM by adding regularization terms for both the main and interaction effects:
\begin{equation}
    NewLoss = Loss + {\lambda _1}\left\| \mathbf{u} \right\|_2^2 + {\lambda _2}\left\| \mathbf{V} \right\|_F^2
\end{equation}
where $\left\| \cdot \right\|_F $ is \textit{Frobenius} norm. In addition to avoid over-fitting problem, the regularization terms can constrain the shape of score functions and adjust the effect of the attribute interactions. Obviously, $\lambda_1$ determines the complexity of the score functions of individual attributes. When $\lambda_1$ increases, the model tends to penalize the difference between the two consecutive characteristic points, thus the score functions change smoother. In contrast, $\lambda_2$ determines the impact of the attribute interactions on the link function. A smaller $\lambda_2$ leads to less intensity of attribute interactions. The value of $\lambda_1$ and $\lambda_2$ can be predefined in accordance with the users' domain knowledge. For instance, if the user ensures that the involved attributes are usually irrelevant to each other, $\lambda_2$ can be set to a large value.

The capacity to explain the results makes XOFM helpful for managerial problems. For example, physicians need accurate re-admission prediction models that can reveal the detailed effects of risk factors for individual patients. By visualizing the main and interaction effects of the risk factors, it is easier for physicians to examine the consistency between the underlying model and their prior knowledge. Such explainability is important to test the rationality of a prediction model. 

\subsection{XOFM with monotonicity constraints}

In some real world decision problems, the user's prior knowledge about some attributes, for instance the monotonicity of attributes is required to be satisfied \cite{jacquet2001preference}. The proposed XOFM can also adapt to these cases where the attributes are restricted to be monotonic. Note that in these cases, the monotonicity of interaction effects of two monotonic attributes should also be maintained \cite{greco2014robust}. For this purpose, we reformulate original XOFM with additional constraints:

\begin{equation}
\begin{array}{l}
\mathop {Min}\limits_{{\bf{u}},{\bf{V}}}  \: NewLoss\\
s.t.:\Delta _j^s \ge 0,s = 1, \ldots {\gamma _j},j = 1, \ldots m,\\
\quad \quad \left\langle {{{\bf{v}}_{{n_1}}},{{\bf{v}}_{{n_2}}}} \right\rangle  \ge 0,{n_1} = 1, \ldots \gamma ,{n_2} = 1, \ldots \gamma ,
\end{array}
\label{eq-newmodel}
\end{equation}

There are many methods can be used to optimize Problem (\ref{eq-newmodel}). In this study, we first substitute $\Delta_j^k$ by $({\Delta'}_j^{k})^2$ and do not constrain on ${\Delta'}_j^{k}$. In this way, the problems regarding ${\Delta'}_j^{k}$ are unconstrained, thus standard gradient decent algorithms can be used for optimizing ${\Delta'}_j^{k}$ and $\Delta_j^k = ({\Delta'}_j^{k})^2$. We then use the projected gradient methods to optimize the parameters in $\mathbf{V}$ \cite{luenberger1984linear}. More specifically, the constraints on dot products can be replaced with the following constraints regarding $v_{n,f}$:
\begin{equation}
\begin{array}{l}
\mathop {Min}\limits_{{\bf{u}},{\bf{V}}} \: NewLoss\\
s.t.:\Delta _j^s = {\left( {\Delta _j^{'s}} \right)^2},s = 1, \ldots {\gamma _j},j = 1, \ldots m,\\
{\rm{      }}{v_{{n},{f}}} \ge 0, n = 1,\ldots,\gamma, f=1,\ldots,k,
\end{array}
\label{eq-newmodel2}
\end{equation}
where $v_{{n},{f}}$ is the $f$-th element in vector $\mathbf{v}_n$. Note that the feasible region in Problem (\ref{eq-newmodel2}) is convex subset of the feasible region in Problem (\ref{eq-newmodel}). Although the new constraints are stricter, they are simpler to solve \cite{xu2018modeling}. To solve the Problem (\ref{eq-newmodel}), we define an indicator function: $\mathbbm{1}_{\infty}(\mathbf{v}_n)=\left\{ \begin{array}{l}
0,\: v_{{n,f}}\ge 0, \forall {v_{{n,f}}}\: \text{in} \: {{\bf{v}}_n}\\
\infty,\: \text{otherwise}
\end{array} \right.$ and rewrite Problem (\ref{eq-newmodel2}) as:
\begin{equation}
\mathop {Min}\limits_{{\bf{u}},{\bf{V}}}  \: NewLoss' + \sum\limits_{n = 1}^\gamma  {{\mathbbm{1}_{\infty}(\mathbf{v}_n)}}
\label{eq-newmodel3}
\end{equation}
Note that the ${\Delta}_j^{s}$ is replaced by $({\Delta'}_j^{s})^2$ in $NewLoss'$. in Problem (\ref{eq-newmodel3}), we first optimize the differentiable part $NewLoss'$ in the objective function and then use an euclidean projection to ensure that the solutions are in the feasible region. We add an extra step between steps 8 and 9 in Algorithm \ref{alg-sgd}, i.e., $v_{n,f}\leftarrow P(v_{n,f})$, where $P(v_{n,f}) = \left\{ \begin{array}{l}
0,\: {v_{n,f}} \le 0,\\
{v_{n,f}},\: \text{otherwise},
\end{array} \right.$ for all $n=1,\ldots,\gamma,f=1,\ldots,k$.

We apply this specific model to a real dataset and the experimental results, including the obtained marginal value functions and all pairwise interaction effects are presented in the online version.

\section{Experimental Analysis}
\label{sec-exper}

\subsection{Experimental Design}
\label{subsec-expdes}

We evaluate the proposed XOFM on seven benchmark datasets and compare its performance with that of state-of-the-art baseline ordinal regression methods. The characteristics of the datasets\footnote{Abalone, Auto Riskiness, Boston Housing, Stock and Skill are downloaded from \url{https://www.gagolewski.com/resources/data/ordinal-regression/}, Breast data is downloaded from UCI \cite{Dua:2019}, and Chinese University data is collected from \url{http://www.shanghairanking.com/Chinese_Universities_Rankings/}.} are presented in Table \ref{table-dataset} and the selected baselines are presented in Table \ref{table-orapproach}. A five-fold cross validation process is used to train the models. Note that for consistency, we do not add any regularization terms in this experiment and the parameters in XOFM are $\gamma$ and $\tau$. We use the standard stochastic gradient descent algorithm to optimize XOFM, but other algorithms introduced in \cite{rendle2012factorization} can also be applied to optimize XOFM. After the parameters in each model are determined, we randomly select 80\% and 20\% of the data as the training set and testing set, respectively, and then average the results over 30 trials to evaluate the performance. The code for the proposed XOFM is attached for review purpose, and will be made publicly available on Github. 

\begin{table}
\begin{center}
{\caption{Characteristics of the datasets.}\label{table-dataset}}
\begin{tabular}{llll}
\hline
Dataset & \#Obj. & \#Attr.& \#Classes \\
\hline
abalone ord (AO) & 4,177 & 7 & 8 \\
auto riskiness (AR) & 160 & 15 & 6 \\
breast (BR) & 106 & 9 & 6\\
Boston housing ord (BHO) & 506 & 13 & 5\\
Chinese university (CU) & 600 & 10 & 5 \\
stock ord (SO)& 950 & 9 & 5\\
skill (SK) & 3,302 & 18 & 6 \\
\hline
\end{tabular}
\end{center}
\end{table}

\begin{table}
\begin{center}
{\caption{Selected ordinal regression approaches.}\label{table-orapproach}}
\begin{tabular}{ll}
\hline
Abbr. & Short description \\
\hline
LIT & Ordinal logistic model with immediate-threshold variant \cite{rennie2005loss}. \\
LAT & Ordinal logistic model with all-threshold variant \cite{rennie2005loss}. \\
KDLOR & Kernel discriminant learning for ordinal regression \cite{sun2009kernel}. \\
POM & Proportional odds model \cite{mccullagh1980regression}. \\
SVR & Support vector regression \cite{smola2004tutorial}.\\
NNOR & Neural network with ordered partition for ordinal regression \cite{cheng2008neural}. \\
ELMOR & Extreme learning machine for ordinal regression \cite{deng2010ordinal}. \\
CNNOR & Convolutional deep neural network for ordinal regression \cite{liu2017deep}. \\
OFMHS & Ordinal factorization machine with hierarchical sparsity \cite{guoordinal}.\\
XOFM & Proposed explainable ordinal factorization model.\\
\hline
\end{tabular}
\end{center}
\end{table}

To evaluate the performance, we adopt two measures. First, we use the \textit{Accuracy (Acc)} to measure the global performance but does not consider the order:
\begin{equation}
    Acc = \frac{1}{N}\sum_{i=1}^{N} \left[\kern-0.15em\left[ {\hat{y}_i = y_i}  \right]\kern-0.15em\right]
\end{equation}
where $\hat{y}_i$ is the predicted label. 

Second, we adopt the \textit{Mean Absolute Error} ($MAE$) to measure the deviation of the predicted labels from the actual labels \cite{baccianella2009evaluation}:
\begin{equation}
    MAE = \frac{1}{N}\sum_{i=1}^{N}|\hat{y}_i - y_i|
\end{equation}

\subsection{Results Analysis}
\label{subsec-result}
\begin{table*}
\begin{center}
{\caption{Test $Acc$ results for each dataset and approach.}\label{table-accresult}}
\resizebox{0.95\textwidth}{!}{ 
\begin{tabular}{llllllll}
\hline
Approaches & AO & AR & BR & BHO & CU & SO & SK \\
\hline
LIT & 0.304$\pm$0.034 & 0.469$\pm$0.031 & 0.619$\pm$0.015 & 0.634$\pm$0.024 & 0.783$\pm$0.019 & 0.721$\pm$0.023 & 0.325$\pm$0.086 \\
LAT & 0.316$\pm$0.037 & 0.219$\pm$0.034 & 0.619$\pm$0.014 & 0.634$\pm$0.021 & 808$\pm$0.021 & 0.710$\pm$0.020 & 0.346$\pm$0.067 \\
KDLOR & 0.295$\pm$0.028 & 0.375$\pm$0.037 & 0.429$\pm$0.017 & 0.584$\pm$0.020 & 0.900$\pm$0.008 & 0.789$\pm$0.012 & 0.377$\pm$0.055 \\
POM & 0.347$\pm$0.029 & 0.406$\pm$0.029 & 0.667$\pm$0.013 & \textbf{0.653$\pm$0.022} & \textbf{0.942$\pm$0.009} & 0.689$\pm$0.021 & 0.428$\pm$0.018 \\
SVR & 0.322$\pm$0.031 & 0.625$\pm$0.030 & 0.476$\pm$0.017 & 0.584$\pm$0.021 & 0.917$\pm$0.006 & 0.845$\pm$0.018 & 0.362$\pm$0.022 \\
NNOR & 0.333$\pm$0.027 & 0.656$\pm$0.041 & 0.619$\pm$0.015 & 0.644$\pm$0.019 & 0.900$\pm$0.005 & 0.816$\pm$0.024 & \textbf{0.442$\pm$0.031} \\
ELMOR & \textbf{0.352$\pm$0.029} & 0.594$\pm$0.038 & 0.571$\pm$0.018 & 0.584$\pm$0.020 & 0.825$\pm$0.019 & 0.816$\pm$0.021 & 0.424$\pm$0.028 \\
CNNOR & 0.344$\pm$0.036 & 0.381$\pm$0.032 & 0.598$\pm$0.011 & 0.581$\pm$0.022 & 0.933$\pm$0.008 & 0.724$\pm$0.039 & 0.318$\pm$0.035 \\
OFMHS & 0.323$\pm$0.039 & 0.491$\pm$0.032 & 0.638$\pm$0.012 & 0.588$\pm$0.014 & 0.917$\pm$0.005 & 0.793$\pm$0.025 & 0.301$\pm$0.029 \\
XOFM & 0.348$\pm$0.023 & \textbf{0.719$\pm$0.031} & \textbf{0.682$\pm$0.021} & 0.637$\pm$0.017 & \textbf{0.942$\pm$0.009} & \textbf{0.858$\pm$0.033} & 0.395$\pm$0.024 \\
\hline
\end{tabular}
}
\end{center}
\end{table*}

\begin{table*}
\begin{center}
{\caption{Test $MAE$ results for each dataset and approach.}\label{table-maeresult}}
\resizebox{0.95\textwidth}{!}{ 
\begin{tabular}{llllllll}
\hline
Approaches & AO & AR & BR & BHO & CU & SO & SK \\
\hline
LIT & 1.256$\pm$0.097 & 0.594$\pm$0.071 & 0.524$\pm$0.065 & 0.475$\pm$0.053 & 0.217$\pm$0.011 & 0.295$\pm$0.042 & 0.964$\pm$0.079 \\
LAT & 1.081$\pm$0.103 & 0.875$\pm$0.081 & 0.619$\pm$0.057 & 0.475$\pm$0.043 & 0.200$\pm$0.018 & 0.305$\pm$0.044 & 0.826$\pm$0.077 \\
KDLOR & 1.268$\pm$0.067 & 0.906$\pm$0.062 & 0.669$\pm$0.053 & 0.485$\pm$0.045 & 0.101$\pm$0.008 & 0.216$\pm$0.029 & 0.856$\pm$0.067 \\
POM & 1.103$\pm$0.078 & 0.688$\pm$0.082 & 0.429$\pm$0.052 & 0.416$\pm$0.037 & \textbf{0.058$\pm$0.009} & 0.316$\pm$0.026 & 0.712$\pm$0.055 \\
SVR & \textbf{0.982$\pm$0.089} & 0.375$\pm$0.063 & 0.619$\pm$0.042 & 0.436$\pm$0.032 & 0.083$\pm$0.005 & 0.179$\pm$0.018 & 0.839$\pm$0.050 \\
NNOR & 1.034$\pm$0.083 & 0.406$\pm$0.042 & 0.381$\pm$0.030 & 0.426$\pm$0.041 & 0.100$\pm$0.013 & 0.184$\pm$0.022 & \textbf{0.685$\pm$0.058} \\
ELMOR & 0.996$\pm$0.093 & 0.438$\pm$0.035 & 0.667$\pm$0.036 & 0.475$\pm$0.030 & 0.175$\pm$0.018 & 0.184$\pm$0.021 & 0.694$\pm$0.033 \\
CNNOR & 1.326$\pm$0.029 & 0.681$\pm$0.013 & 0.488$\pm$0.020 & 0.497$\pm$0.019 & 0.092$\pm$0.007 & 0.298$\pm$0.028 & 1.109$\pm$0.051 \\
OFMHS & 1.398$\pm$0.041 & 0.619$\pm$0.040 & 0.420$\pm$0.021 & 0.502$\pm$0.020 & 0.108$\pm$0.008 & 0.282$\pm$0.016 & 1.182$\pm$0.031 \\
XOFM & 1.033$\pm$0.021 & \textbf{0.343$\pm$0.019} & \textbf{0.363$\pm$0.009} & \textbf{0.411$\pm$0.016} & \textbf{0.058$\pm$0.008} & \textbf{0.152$\pm$0.018} & 0.801$\pm$0.032 \\
\hline
\end{tabular}
}
\end{center}
\end{table*}

We report $Acc$ and $MAE$ in Table \ref{table-accresult} and Table \ref{table-maeresult}, respectively. The best result for each dataset is highlighted. From the $Acc$ results, the proposed XOFM achieves either the best (AR, BR, CU, and SO) or near-the-best (BHO) results for the small-sized datasets. Although EFOM does not perform the best on some large-sized datasets (AO and SK), its performance is better than most baselines and not very much worse than the best one. Similar conclusions can be obtained given Table \ref{table-maeresult}. The proposed XOFM is lowest for five out of seven datasets according to $MAE$. This indicates that the wrong predictions made by XOFM are not much deviated from the true labels. For example, XOFM is not the best method for BHO dataset according to $Acc$; however, it achieves the least $MAE$. It indicates that the proposed XOFM can better take advantage of the ordinal information to reduce the error.

The traditional ordinal regression methods usually require to calculate the difference between two objects, which leads to some sparse training data. This problem heavily affects the performance on smaller datasets given insufficient training samples and larger variance. Similarly, XOFM discretizes an attribute's value into multiple scales in the form of an attribute vector and trains the parameters by determining the differences between every two attribute vectors. This process also leads to the sparsity problem. XOFM utilizes the FMs scheme to address this issue. The experiments results validate the effectiveness of XOFM on five smaller-sized datasets. In the next subsection, we will show that the proposed XOFM can provide meaningful explanations for predictions.

\subsection{Explainability}

The capability to decipher the detailed relationships between attributes and predictions is the key to explain the model. To demonstrate such explainability of XOFM, we present the obtained score functions for the Breast data in Figure \ref{fig-attr}. This dataset contains some electrical impedance measurements in samples of freshly excised tissue from the breast. XOFM is used to classify a sampled tissue into one of six ordered classes, \textit{Carcinoma} $\succ$ \textit{Fibro-adenoma} $\succ$ \textit{Mastopathy} $\succ$ \textit{Glandular} $\succ$ \textit{Connective} $\succ$ \textit{Adipose}, where \textit{Carcinoma} is the most severe class and \textit{Adipose} is the safest. The involved attributes are described in Table \ref{table-desp}.

\begin{figure}
\centerline{\includegraphics[width=1.0\columnwidth]{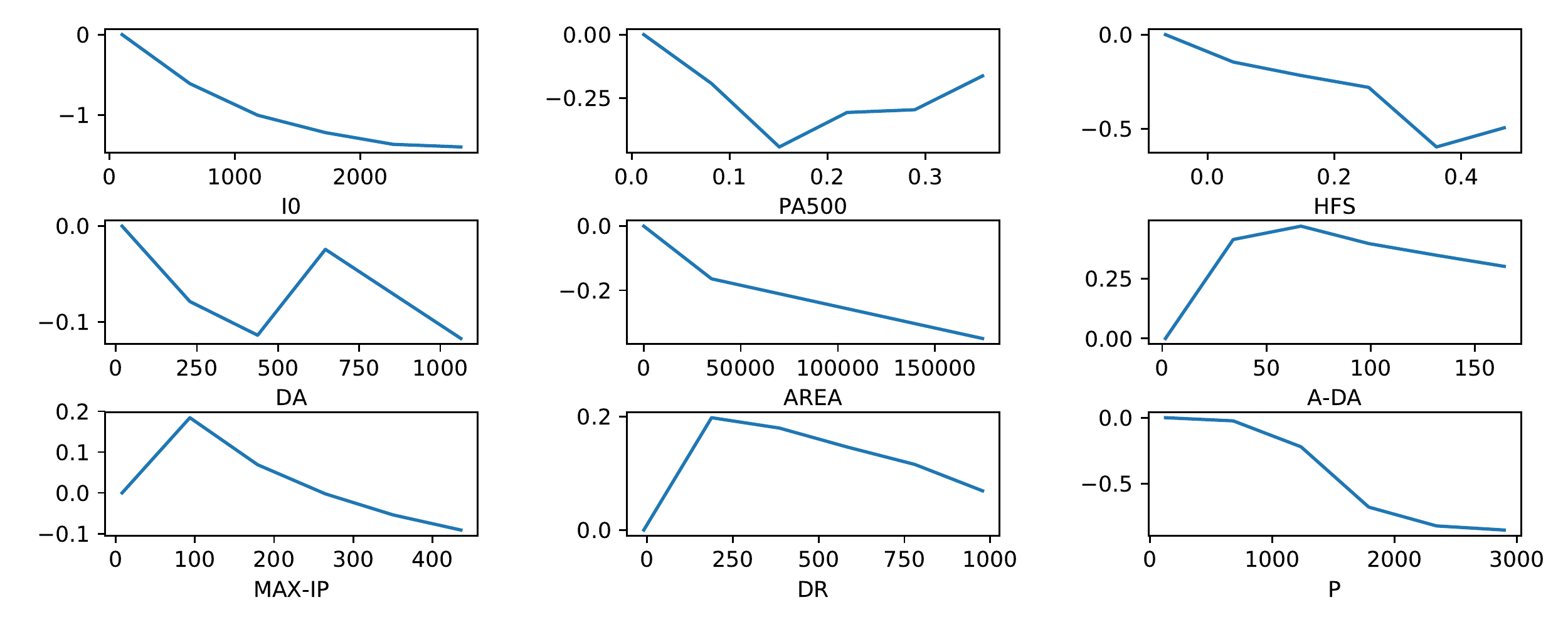}}
\caption{Score functions of the attributes in Breast dataset when $\lambda_1 = 0$ and $\lambda_2 = 0$. }
\label{fig-attr}
\end{figure}

\begin{figure}
\centerline{\includegraphics[width=1.0\columnwidth]{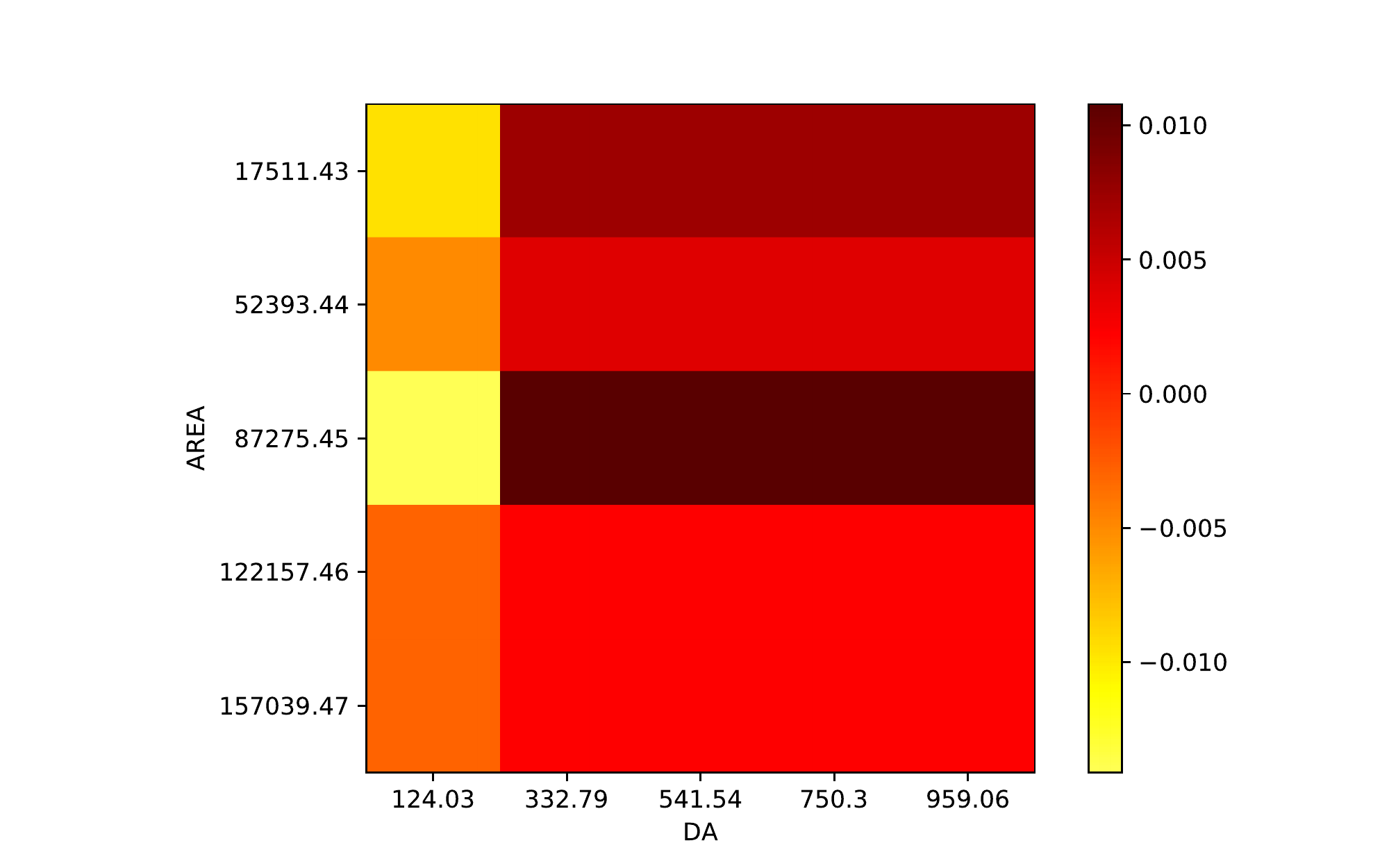}}
\caption{Heat-map for pairwise interactions between attributes \textit{DA} and \textit{AREA} when $\lambda_1 = \lambda_2=0$.}
\label{fig-interact1}
\end{figure}

\begin{table}
\begin{center}
{\caption{Descriptions for attributes in breast data.}\label{table-desp}}
\begin{tabular}{ll}
\hline
Attr. & Description \\
\hline
\textit{I0} & impedivity (ohm) at zero frequency \\
\textit{PA500} & phase angle at 500 KHz \\
\textit{HFS} & high-frequency slope of phase angle \\
\textit{DA} & impedance distance between spectral ends \\
\textit{AREA} &	area under spectrum \\
\textit{A-DA} &	area normalized by \textit{DA} \\
\textit{MAX-IP} & maximum of the spectrum \\
\textit{DR} & distance between \textit{I0} 
and max frequency point \\
\textit{P} & length of the spectral curve \\
\hline
\end{tabular}
\end{center}
\end{table}

In Figure \ref{fig-attr}, each score function is in a piece-wise linear form and characterizes a single attribute's contribution to the risk for breast cancer. For example, the attribute \textit{I0}, the Impedivity (ohm) at zero frequency, negatively affects the risk for an object belonging to the \textit{Carcinoma} class because the marginal score decreases along with the increase of attribute value. Moreover, from the marginal scores scales (the distance between the maximal and the minimal values in $y$-axis), we find that the attribute \textit{I0} is the most important one because its marginal score ranges from 0 to -1.25, which is the largest one among all attributes. That conclusion is consistent with previous clinical findings \cite{da2000classification}.

XOFM can decipher the interaction effects of the attributes. For brevity, we report one of the pairwise interactions as a heat-map (Figure \ref{fig-interact1}). The color represents the intensity of the interactions in different attribute value intervals. For example, when \textit{AREA} is around 87,275.45, its interaction with \textit{DA} is stronger, indicating that a breast tissue with $AREA$ around 87,275.45 and $DA$ within the interval $[332.79, 959.06]$ is more likely to be malignant.

\subsection{Modification}
\label{subsec-modification}

XOFM is an flexible model that can be modified by tuning the regularization terms. As introduced in previous section, $\lambda_1$ controls the complexity of the score functions and $\lambda_2$ determines the intensity of the attribute interactions. In addition to determine the parameters by cross-validation process, we can progressively adjust the parameters based on the user's domain knowledge. For example, if a physician insists that the main effects are more important than the attribute interactions, we can increase $\lambda_2$ and the resulted new heat-map is shown in Figure \ref{fig-interact2}. Obviously, the intensity of the interaction is weaker than the previous one. On the contrast, if we increase $\lambda_1$, the score functions would become more flat and some exhibit different curve shapes (Figure \ref{fig-attr2}). In practice, we can determine the values of regularization terms based on the performance, or by soliciting the opinion of domain experts.
\begin{figure}
\centerline{\includegraphics[width=1.0\columnwidth]{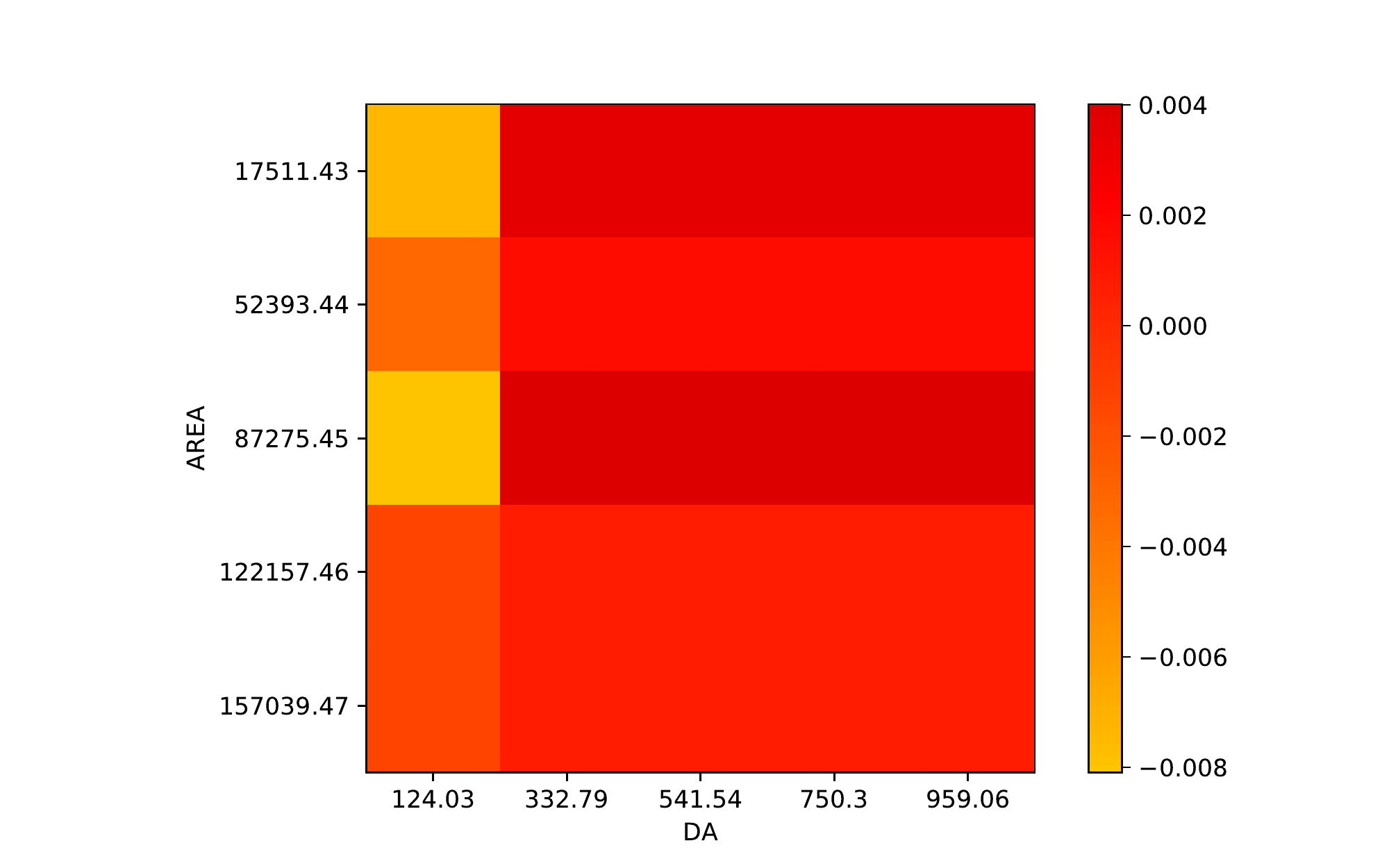}}
\caption{Heat-map for pairwise interactions between attributes \textit{DA} and \textit{AREA} when $\lambda_1 = 0$ and $\lambda_2=0.005$.}
\label{fig-interact2}
\end{figure}

\begin{figure}
\centerline{\includegraphics[width=1.0\columnwidth]{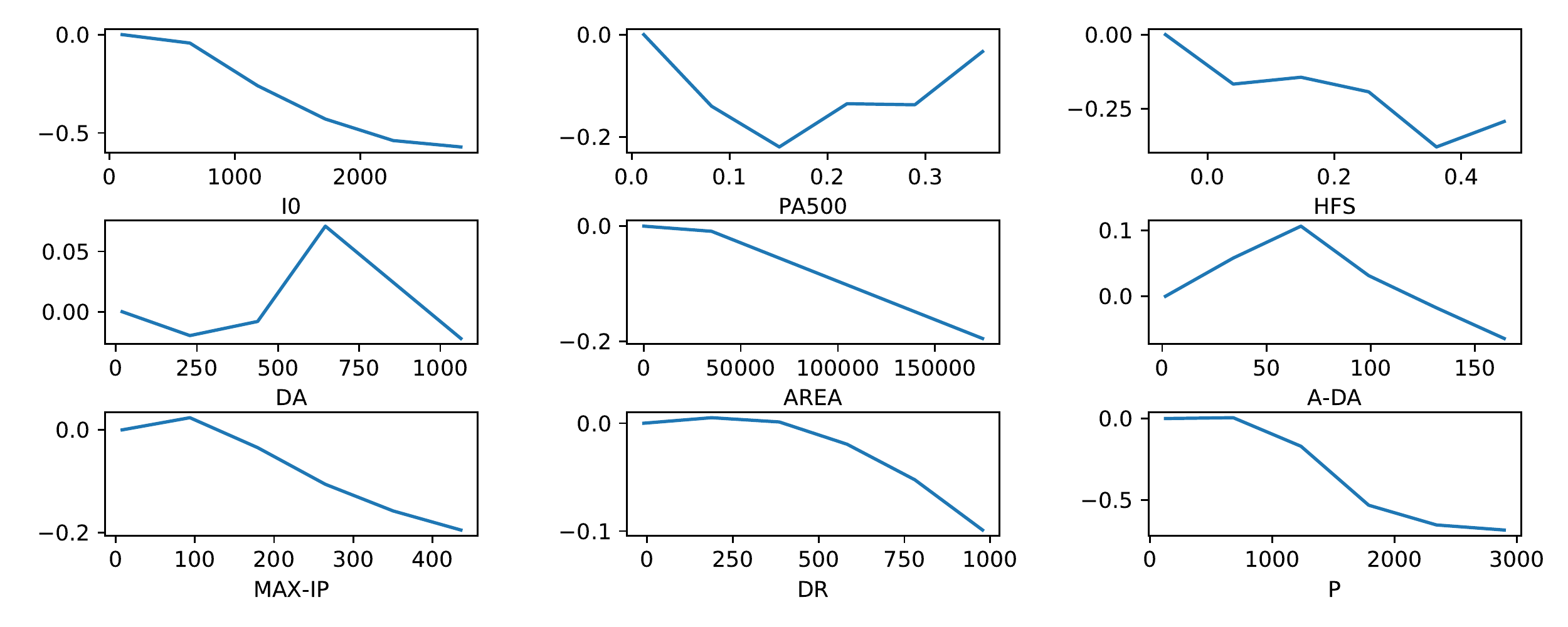}}
\caption{score functions of the attributes in Breast dataset when $\lambda_1 = 0.01$ and $\lambda_2 = 0$.}
\label{fig-attr2}
\end{figure}

\section{Conclusion}
\label{sec-con}

In this study, we propose the XOFM, a new factorization machines-based ordinal regression model. XOFM is able to provide state-of-the-art prediction performance, and more importantly, provide meaningful explainability that deciphers the detailed contributions of attributes and their interactions. Such explainability makes XOFM uniquely effective in providing decision making support, where the ability to explain how the predictions are made is as much needed as achieving good accuracy. Moreover, XOFM presents a general explainable-modeling framework that can be calibrated/modified for various prediction problems other than ordinal regressions. 

\ack We appreciate the anonymous reviewers and PC members for their valuable suggestions and comments.

\bibliography{ecai}

\includepdfmerge{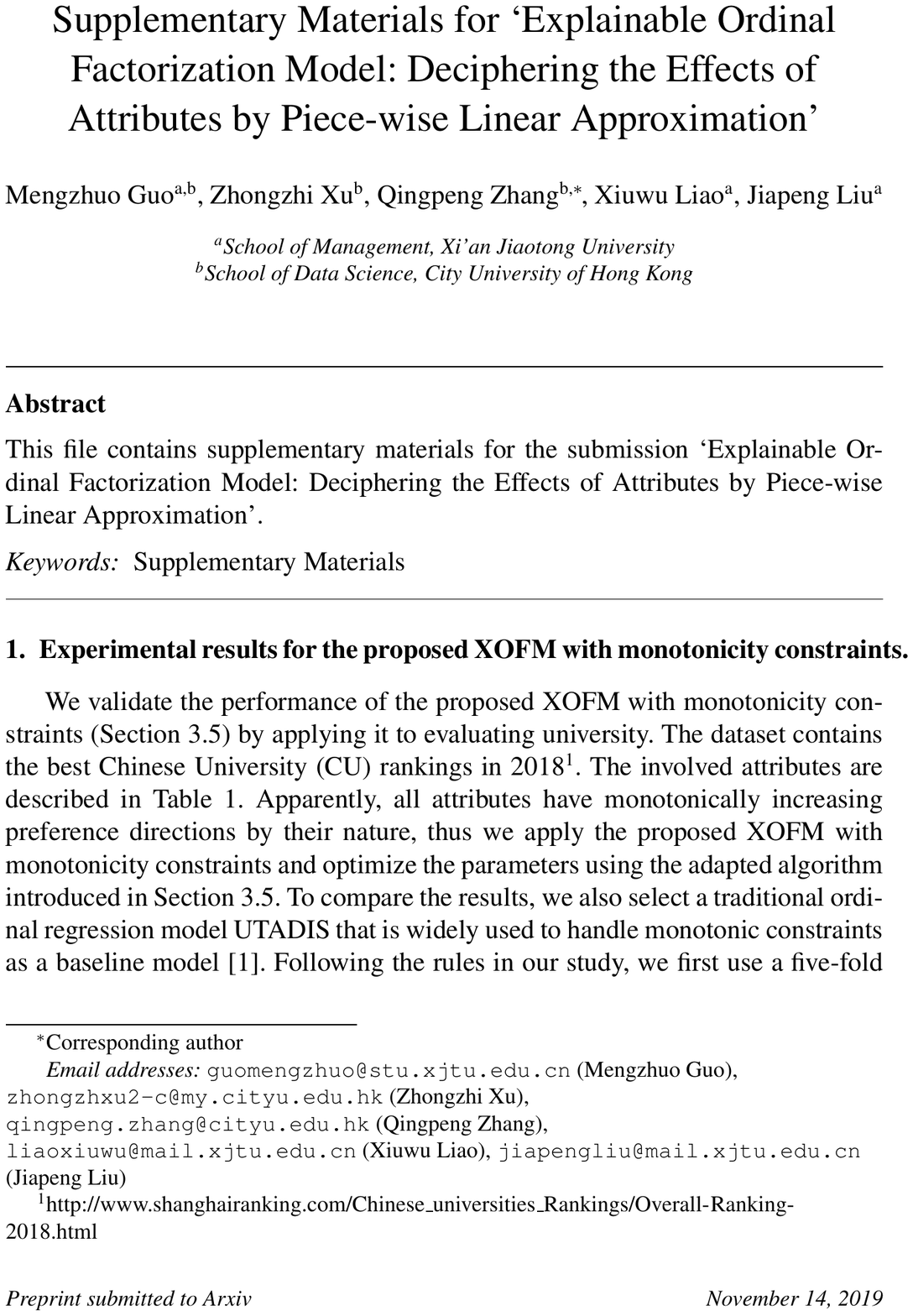, 1-5}

\end{document}